\tikzstyle{none}=[inner sep=0pt]
\definecolor{hexcolor0xff0000}{rgb}{1.000,0.000,0.000}
\definecolor{hexcolor0x000000}{rgb}{0.000,0.000,0.000}
\definecolor{hexcolor0x00ff00}{rgb}{0.000,1.000,0.000}
\definecolor{hexcolor0x000000}{rgb}{0.000,0.000,0.000}
\definecolor{hexcolor0xffff00}{rgb}{1.000,1.000,0.000}
\definecolor{hexcolor0xffffff}{rgb}{1.000,1.000,1.000}
\tikzstyle{rn}=[circle,fill=hexcolor0xff0000,draw=hexcolor0x000000,line width=0.8 pt]
\tikzstyle{gn}=[circle,fill=hexcolor0x00ff00,draw=hexcolor0x000000,line width=0.8 pt]
\tikzstyle{yn}=[circle,fill=hexcolor0xffff00,draw=hexcolor0x000000,line width=0.8 pt]
\tikzstyle{wn}=[circle,fill=hexcolor0xffffff,draw=hexcolor0x000000,line width=0.8 pt]
\tikzstyle{wnthick}=[circle,fill=hexcolor0xffffff,draw=hexcolor0x000000,line width=2.500]
\tikzstyle{simple}=[-,draw=hexcolor0x000000,line width=2.000]
\tikzstyle{arrow}=[-,draw=hexcolor0x000000,postaction={decorate},decoration={markings,mark=at position .5 with {\arrow{>}}},line width=2.000]
\tikzstyle{tick}=[-,draw=hexcolor0x000000,postaction={decorate},decoration={markings,mark=at position .5 with {\draw (0,-0.1) -- (0,0.1);}},line width=2.000]
\tikzstyle{halfthickness}=[-,draw=hexcolor0x000000,line width=0.500]
\tikzstyle{thick}=[-,draw=hexcolor0x000000,line width=2.500]
\tikzstyle{thicker}=[-,draw=hexcolor0x000000,line width=4.000]
\tikzstyle{env}=[copoint,regular polygon rotate=0,minimum width=0.2cm, fill=black]
\tikzstyle{probs}=[shape=semicircle,fill=white,draw=black,shape border rotate=180,minimum width=1.2cm]
\tikzstyle{every picture}=[baseline=-0.25em,scale=0.5]
\tikzstyle{dotpic}=[] 
\tikzstyle{diredges}=[every to/.style={diredge}]
\tikzstyle{math matrix}=[matrix of math nodes,left delimiter=(,right delimiter=),inner sep=2pt,column sep=1em,row sep=0.5em,nodes={inner sep=0pt},text height=1.5ex, text depth=0.25ex]
\tikzstyle{inline text}=[text height=1.5ex, text depth=0.25ex,yshift=0.5mm]
\tikzstyle{label}=[font=\footnotesize,text height=1.5ex, text depth=0.25ex,yshift=0.5mm]
\tikzstyle{left label}=[label,anchor=east,xshift=1.5mm]
\tikzstyle{right label}=[label,anchor=west,xshift=-1.5mm]
\tikzstyle{braceedge}=[decorate,decoration={brace,amplitude=2mm,raise=-1mm}]
\tikzstyle{small braceedge}=[decorate,decoration={brace,amplitude=1mm,raise=-1mm}]
\tikzstyle{doubled}=[line width=1.6pt] 
\tikzstyle{boldedge}=[doubled,shorten <=-0.17mm,shorten >=-0.17mm]
\tikzstyle{boldedgegray}=[doubled,gray,shorten <=-0.17mm,shorten >=-0.17mm]
\tikzstyle{semidoubled}=[line width=1.4pt] 
\tikzstyle{semiboldedgegray}=[semidoubled,gray,shorten <=-0.17mm,shorten >=-0.17mm]
\tikzstyle{boldedgedashed}=[very thick,dashed,shorten <=-0.17mm,shorten >=-0.17mm]
\tikzstyle{vboldedgedashed}=[doubled,dashed,shorten <=-0.17mm,shorten >=-0.17mm]
\tikzstyle{left hook arrow}=[left hook-latex]
\tikzstyle{right hook arrow}=[right hook-latex]
\tikzstyle{sembracket}=[line width=0.5pt,shorten <=-0.07mm,shorten >=-0.07mm]
\tikzstyle{causal edge}=[->,thick,gray]
\tikzstyle{causal nondir}=[thick,gray]
\tikzstyle{timeline}=[thick,gray, dashed]
\tikzstyle{cedge}=[<->,thick,gray!70!white]
\tikzstyle{empty diagram}=[draw=gray!40!white,dashed,shape=rectangle,minimum width=1cm,minimum height=1cm]
\tikzstyle{empty diagram small}=[draw=gray!50!white,dashed,shape=rectangle,minimum width=0.6cm,minimum height=0.5cm]
\tikzstyle{dot}=[inner sep=0mm,minimum width=2mm,minimum height=2mm,draw,shape=circle]
\tikzstyle{ddot}=[inner sep=0mm, doubled, minimum width=2.5mm,minimum height=2.5mm,draw,shape=circle]
\tikzstyle{black dot}=[dot,fill=black]
\tikzstyle{white dot}=[dot,fill=white,,text depth=-0.2mm]
\tikzstyle{green dot}=[white dot] 
\tikzstyle{gray dot}=[dot,fill=gray!40!white,,text depth=-0.2mm]
\tikzstyle{red dot}=[gray dot] 
\tikzstyle{black ddot}=[ddot,fill=black]
\tikzstyle{white ddot}=[ddot,fill=white]
\tikzstyle{gray ddot}=[ddot,fill=gray!40!white]
\tikzstyle{gray edge}=[gray!40!white]
\tikzstyle{small dot}=[inner sep=0.5mm,minimum width=0pt,minimum height=0pt,draw,shape=circle]
\tikzstyle{small black dot}=[small dot,fill=black]
\tikzstyle{small white dot}=[small dot,fill=white]
\tikzstyle{small gray dot}=[small dot,fill=gray!40!white]
\tikzstyle{causal dot}=[inner sep=0.4mm,minimum width=0pt,minimum height=0pt,draw=white,shape=circle,fill=gray!40!white]
\tikzstyle{phase dimensions}=[minimum size=5mm,font=\footnotesize,rectangle,rounded corners=2.5mm,inner sep=0.2mm,outer sep=-2mm]
\tikzstyle{dphase dimensions}=[minimum size=5mm,font=\footnotesize,rectangle,rounded corners=2.5mm,inner sep=0.2mm,outer sep=-2mm]
\tikzstyle{white phase dot}=[dot,fill=white,phase dimensions]
\tikzstyle{white phase ddot}=[ddot,fill=white,dphase dimensions]
\tikzstyle{green phase ddot}=[ddot,fill=green,dphase dimensions]
\tikzstyle{white rect ddot}=[draw=black,fill=white,doubled,minimum size=5mm,font=\footnotesize,rectangle,rounded corners=2.5mm,inner sep=0.2mm]
\tikzstyle{gray rect ddot}=[draw=black,fill=gray!40!white,doubled,minimum size=6mm,font=\footnotesize,rectangle,rounded corners=3mm]
\tikzstyle{gray phase dot}=[dot,fill=gray!40!white,phase dimensions]
\tikzstyle{gray phase ddot}=[ddot,fill=gray!40!white,dphase dimensions]
\tikzstyle{red phase ddot}=[ddot,fill=red,dphase dimensions]
\tikzstyle{grey phase dot}=[gray phase dot]
\tikzstyle{grey phase ddot}=[gray phase ddot]
\tikzstyle{small phase dimensions}=[minimum size=4mm,font=\tiny,rectangle,rounded corners=2mm,inner sep=0.2mm,outer sep=-2mm]
\tikzstyle{small dphase dimensions}=[minimum size=4mm,font=\tiny,rectangle,rounded corners=2mm,inner sep=0.2mm,outer sep=-2mm]
\tikzstyle{small gray phase dot}=[dot,fill=gray!40!white,small phase dimensions]
\tikzstyle{small gray phase ddot}=[ddot,fill=gray!40!white,small dphase dimensions]
\tikzstyle{small map}=[draw,shape=rectangle,minimum height=4mm,minimum width=4mm,fill=white]
\tikzstyle{cnot}=[fill=white,shape=circle,inner sep=-1.4pt]
\tikzstyle{asym hadamard}=[fill=white,draw,shape=NEbox,inner sep=0.6mm,font=\footnotesize,minimum height=4mm]
\tikzstyle{asym hadamard conj}=[fill=white,draw,shape=NWbox,inner sep=0.6mm,font=\footnotesize,minimum height=4mm]
\tikzstyle{asym hadamard dag}=[fill=white,draw,shape=SEbox,inner sep=0.6mm,font=\footnotesize,minimum height=4mm]
\tikzstyle{hadamard}=[fill=white,draw,inner sep=0.6mm,font=\footnotesize,minimum height=4mm,minimum width=4mm]
\tikzstyle{small hadamard}=[fill=white,draw,inner sep=0.6mm,minimum height=1.5mm,minimum width=1.5mm]
\tikzstyle{dhadamard}=[hadamard,doubled]
\tikzstyle{small dhadamard}=[small hadamard,doubled]
\tikzstyle{small dhadamard rotate}=[small hadamard,doubled,rotate=45]
\tikzstyle{antipode}=[white dot,inner sep=0.3mm,font=\footnotesize]
\tikzstyle{scalar}=[diamond,draw,inner sep=0.5pt,font=\small]
\tikzstyle{dscalar}=[diamond,doubled, draw,inner sep=0.5pt,font=\small]
\tikzstyle{small box}=[rectangle,inline text,fill=white,draw,minimum height=5mm,yshift=-0.5mm,minimum width=5mm,font=\small]
\tikzstyle{small gray box}=[small box,fill=gray!30]
\tikzstyle{medium box}=[rectangle,inline text,fill=white,draw,minimum height=5mm,yshift=-0.5mm,minimum width=10mm,font=\small]
\tikzstyle{square box}=[small box] 
\tikzstyle{medium gray box}=[small box,fill=gray!30]
\tikzstyle{semilarge box}=[rectangle,inline text,fill=white,draw,minimum height=5mm,yshift=-0.5mm,minimum width=12.5mm,font=\small]
\tikzstyle{large box}=[rectangle,inline text,fill=white,draw,minimum height=5mm,yshift=-0.5mm,minimum width=15mm,font=\small]
\tikzstyle{large gray box}=[small box,fill=gray!30]
\tikzstyle{Bayes box}=[rectangle,fill=black,draw, minimum height=3mm, minimum width=3mm]
\tikzstyle{gray square point}=[small box,fill=gray!50]
\tikzstyle{dphase box white}=[dhadamard]
\tikzstyle{dphase box gray}=[dhadamard,fill=gray!50!white]
\tikzstyle{point}=[regular polygon,regular polygon sides=3,draw,scale=0.75,inner sep=-0.5pt,minimum width=9mm,fill=white,regular polygon rotate=180]
\tikzstyle{copoint}=[regular polygon,regular polygon sides=3,draw,scale=0.75,inner sep=-0.5pt,minimum width=9mm,fill=white]
\tikzstyle{dpoint}=[point,doubled]
\tikzstyle{dcopoint}=[copoint,doubled]
\tikzstyle{wide copoint}=[fill=white,draw,shape=isosceles triangle,shape border rotate=90,isosceles triangle stretches=true,inner sep=0pt,minimum width=1.5cm,minimum height=6.12mm]
\tikzstyle{wide point}=[fill=white,draw,shape=isosceles triangle,shape border rotate=-90,isosceles triangle stretches=true,inner sep=0pt,minimum width=1.5cm,minimum height=6.12mm,yshift=-0.0mm]
\tikzstyle{wide point plus}=[fill=white,draw,shape=isosceles triangle,shape border rotate=-90,isosceles triangle stretches=true,inner sep=0pt,minimum width=1.74cm,minimum height=7mm,yshift=-0.0mm]
\tikzstyle{wide dpoint}=[fill=white,doubled,draw,shape=isosceles triangle,shape border rotate=-90,isosceles triangle stretches=true,inner sep=0pt,minimum width=1.5cm,minimum height=6.12mm,yshift=-0.0mm]
\tikzstyle{wide dcopoint}=[fill=white,doubled,draw,shape=isosceles triangle,shape border rotate=90,isosceles triangle stretches=true,inner sep=0pt,minimum width=1.5cm,minimum height=6.12mm,yshift=-0.0mm]
\tikzstyle{tinypoint}=[regular polygon,regular polygon sides=3,draw,scale=0.55,inner sep=-0.15pt,minimum width=6mm,fill=white,regular polygon rotate=180]
\tikzstyle{white point}=[point]
\tikzstyle{white dpoint}=[dpoint]
\tikzstyle{green point}=[white point] 
\tikzstyle{white copoint}=[copoint]
\tikzstyle{gray point}=[point,fill=gray!40!white]
\tikzstyle{gray dpoint}=[gray point,doubled]
\tikzstyle{red point}=[gray point] 
\tikzstyle{gray copoint}=[copoint,fill=gray!40!white]
\tikzstyle{gray dcopoint}=[gray copoint,doubled]
\tikzstyle{white point guide}=[regular polygon,regular polygon sides=3,font=\scriptsize,draw,scale=0.65,inner sep=-0.5pt,minimum width=9mm,fill=white,regular polygon rotate=180]
\tikzstyle{black point}=[point,fill=black,font=\color{white}]
\tikzstyle{black copoint}=[copoint,fill=black,font=\color{white}]
\tikzstyle{tiny gray point}=[tinypoint,fill=gray!40!white]
\tikzstyle{diredge}=[->]
\tikzstyle{ddiredge}=[<->]
\tikzstyle{rdiredge}=[<-]
\tikzstyle{thickdiredge}=[->, very thick]
\tikzstyle{pointer edge}=[->,very thick,gray]
\tikzstyle{pointer edge part}=[very thick,gray]
\tikzstyle{dashed edge}=[dashed]
\tikzstyle{thick dashed edge}=[very thick,dashed]
\tikzstyle{thick gray dashed edge}=[thick dashed edge,gray!40]
\tikzstyle{thick map edge}=[very thick,|->]
\newcommand{\boxshape}[3]{%
\pgfdeclareshape{#1}{
\inheritsavedanchors[from=rectangle] 
\inheritanchorborder[from=rectangle]
\inheritanchor[from=rectangle]{center}
\inheritanchor[from=rectangle]{north}
\inheritanchor[from=rectangle]{south}
\inheritanchor[from=rectangle]{west}
\inheritanchor[from=rectangle]{east}
\backgroundpath{
\southwest \pgf@xa=\pgf@x \pgf@ya=\pgf@y
\northeast \pgf@xb=\pgf@x \pgf@yb=\pgf@y

\@tempdima=#2
\@tempdimb=#3

\pgfpathmoveto{\pgfpoint{\pgf@xa - 5pt + \@tempdima}{\pgf@ya}}
\pgfpathlineto{\pgfpoint{\pgf@xa - 5pt - \@tempdima}{\pgf@yb}}
\pgfpathlineto{\pgfpoint{\pgf@xb + 5pt + \@tempdimb}{\pgf@yb}}
\pgfpathlineto{\pgfpoint{\pgf@xb + 5pt - \@tempdimb}{\pgf@ya}}
\pgfpathlineto{\pgfpoint{\pgf@xa - 5pt + \@tempdima}{\pgf@ya}}
\pgfpathclose
}
}}
\tikzstyle{cloud}=[shape=cloud,draw,minimum width=1.5cm,minimum height=1.5cm]
\tikzstyle{map}=[draw,shape=NEbox,inner sep=2pt,minimum height=6mm,fill=white]
\tikzstyle{dashedmap}=[draw,dashed,shape=NEbox,inner sep=2pt,minimum height=6mm,fill=white]
\tikzstyle{mapdag}=[draw,shape=SEbox,inner sep=2pt,minimum height=6mm,fill=white]
\tikzstyle{mapadj}=[draw,shape=SEbox,inner sep=2pt,minimum height=6mm,fill=white]
\tikzstyle{maptrans}=[draw,shape=SWbox,inner sep=2pt,minimum height=6mm,fill=white]
\tikzstyle{mapconj}=[draw,shape=NWbox,inner sep=2pt,minimum height=6mm,fill=white]
\tikzstyle{langmap}=[draw,shape=NEbox,inner sep=2pt,minimum height=2.4mm,minimum width=3.2mm,fill=white]
\tikzstyle{langmaptrans}=[draw,shape=SWbox,inner sep=2pt,minimum height=2.4mm,minimum width=3.2mm,fill=white]
\tikzstyle{medium map}=[draw,shape=NEbox,inner sep=2pt,minimum height=6mm,fill=white,minimum width=7mm]
\tikzstyle{medium map dag}=[draw,shape=SEbox,inner sep=2pt,minimum height=6mm,fill=white,minimum width=7mm]
\tikzstyle{medium map adj}=[draw,shape=SEbox,inner sep=2pt,minimum height=6mm,fill=white,minimum width=7mm]
\tikzstyle{medium map trans}=[draw,shape=SWbox,inner sep=2pt,minimum height=6mm,fill=white,minimum width=7mm]
\tikzstyle{medium map conj}=[draw,shape=NWbox,inner sep=2pt,minimum height=6mm,fill=white,minimum width=7mm]
\tikzstyle{semilarge map}=[draw,shape=NEbox,inner sep=2pt,minimum height=6mm,fill=white,minimum width=9.5mm]
\tikzstyle{semilarge map trans}=[draw,shape=SWbox,inner sep=2pt,minimum height=6mm,fill=white,minimum width=9.5mm]
\tikzstyle{semilarge map adj}=[draw,shape=SEbox,inner sep=2pt,minimum height=6mm,fill=white,minimum width=9.5mm]
\tikzstyle{semilarge map dag}=[draw,shape=SEbox,inner sep=2pt,minimum height=6mm,fill=white,minimum width=9.5mm]
\tikzstyle{semilarge map conj}=[draw,shape=NWbox,inner sep=2pt,minimum height=6mm,fill=white,minimum width=9.5mm]
\tikzstyle{large map}=[draw,shape=NEbox,inner sep=2pt,minimum height=6mm,fill=white,minimum width=12mm]
\tikzstyle{large map conj}=[draw,shape=NWbox,inner sep=2pt,minimum height=6mm,fill=white,minimum width=12mm]
\tikzstyle{very large map}=[draw,shape=NEbox,inner sep=2pt,minimum height=6mm,fill=white,minimum width=17mm]
\tikzstyle{medium dmap}=[draw,doubled,shape=NEbox,inner sep=2pt,minimum height=6mm,fill=white,minimum width=7mm]
\tikzstyle{medium dmap dag}=[draw,doubled,shape=SEbox,inner sep=2pt,minimum height=6mm,fill=white,minimum width=7mm]
\tikzstyle{medium dmap adj}=[draw,doubled,shape=SEbox,inner sep=2pt,minimum height=6mm,fill=white,minimum width=7mm]
\tikzstyle{medium dmap trans}=[draw,doubled,shape=SWbox,inner sep=2pt,minimum height=6mm,fill=white,minimum width=7mm]
\tikzstyle{medium dmap conj}=[draw,doubled,shape=NWbox,inner sep=2pt,minimum height=6mm,fill=white,minimum width=7mm]
\tikzstyle{semilarge dmap}=[draw,doubled,shape=NEbox,inner sep=2pt,minimum height=6mm,fill=white,minimum width=9.5mm]
\tikzstyle{semilarge dmap trans}=[draw,doubled,shape=SWbox,inner sep=2pt,minimum height=6mm,fill=white,minimum width=9.5mm]
\tikzstyle{semilarge dmap adj}=[draw,doubled,shape=SEbox,inner sep=2pt,minimum height=6mm,fill=white,minimum width=9.5mm]
\tikzstyle{semilarge dmap dag}=[draw,doubled,shape=SEbox,inner sep=2pt,minimum height=6mm,fill=white,minimum width=9.5mm]
\tikzstyle{semilarge dmap conj}=[draw,doubled,shape=NWbox,inner sep=2pt,minimum height=6mm,fill=white,minimum width=9.5mm]
\tikzstyle{large dmap}=[draw,doubled,shape=NEbox,inner sep=2pt,minimum height=6mm,fill=white,minimum width=12mm]
\tikzstyle{large dmap conj}=[draw,doubled,shape=NWbox,inner sep=2pt,minimum height=6mm,fill=white,minimum width=12mm]
\tikzstyle{large dmap trans}=[draw,doubled,shape=SWbox,inner sep=2pt,minimum height=6mm,fill=white,minimum width=12mm]
\tikzstyle{large dmap adj}=[draw,doubled,shape=SEbox,inner sep=2pt,minimum height=6mm,fill=white,minimum width=12mm]
\tikzstyle{large dmap dag}=[draw,doubled,shape=SEbox,inner sep=2pt,minimum height=6mm,fill=white,minimum width=12mm]
\tikzstyle{very large dmap}=[draw,doubled,shape=NEbox,inner sep=2pt,minimum height=6mm,fill=white,minimum width=19.5mm]
\tikzstyle{muxbox}=[draw,shape=rectangle,minimum height=3mm,minimum width=3mm,fill=white]
\tikzstyle{dmuxbox}=[muxbox,doubled]
\tikzstyle{box}=[draw,shape=rectangle,inner sep=2pt,minimum height=6mm,minimum width=6mm,fill=white]
\tikzstyle{dbox}=[draw,doubled,shape=rectangle,inner sep=2pt,minimum height=6mm,minimum width=6mm,fill=white]
\tikzstyle{dmap}=[draw,doubled,shape=NEbox,inner sep=2pt,minimum height=6mm,fill=white]
\tikzstyle{dmapdag}=[draw,doubled,shape=SEbox,inner sep=2pt,minimum height=6mm,fill=white]
\tikzstyle{dmapadj}=[draw,doubled,shape=SEbox,inner sep=2pt,minimum height=6mm,fill=white]
\tikzstyle{dmaptrans}=[draw,doubled,shape=SWbox,inner sep=2pt,minimum height=6mm,fill=white]
\tikzstyle{dmapconj}=[draw,doubled,shape=NWbox,inner sep=2pt,minimum height=6mm,fill=white]
\tikzstyle{ddmap}=[draw,doubled,dashed,shape=NEbox,inner sep=2pt,minimum height=6mm,fill=white]
\tikzstyle{ddmapdag}=[draw,doubled,dashed,shape=SEbox,inner sep=2pt,minimum height=6mm,fill=white]
\tikzstyle{ddmapadj}=[draw,doubled,dashed,shape=SEbox,inner sep=2pt,minimum height=6mm,fill=white]
\tikzstyle{ddmaptrans}=[draw,doubled,dashed,shape=SWbox,inner sep=2pt,minimum height=6mm,fill=white]
\tikzstyle{ddmapconj}=[draw,doubled,dashed,shape=NWbox,inner sep=2pt,minimum height=6mm,fill=white]
\tikzstyle{smap}=[draw,shape=sNEbox,fill=white]
\tikzstyle{smapdag}=[draw,shape=sSEbox,fill=white]
\tikzstyle{smapadj}=[draw,shape=sSEbox,fill=white]
\tikzstyle{smaptrans}=[draw,shape=sSWbox,fill=white]
\tikzstyle{smapconj}=[draw,shape=sNWbox,fill=white]
\tikzstyle{dsmap}=[draw,dashed,shape=sNEbox,fill=white]
\tikzstyle{dsmapdag}=[draw,dashed,shape=sSEbox,fill=white]
\tikzstyle{dsmaptrans}=[draw,dashed,shape=sSWbox,fill=white]
\tikzstyle{dsmapconj}=[draw,dashed,shape=sNWbox,fill=white]
\tikzstyle{mmap}=[draw,shape=mNEbox]
\tikzstyle{mmapdag}=[draw,shape=mSEbox]
\tikzstyle{mmaptrans}=[draw,shape=mSWbox]
\tikzstyle{mmapconj}=[draw,shape=mNWbox]
\tikzstyle{mmapgray}=[draw,fill=gray!40!white,shape=mNEbox]
\tikzstyle{smapgray}=[draw,fill=gray!40!white,shape=sNEbox]
\pgfmathsetmacro{\pgf@shorten@left}{\pgfkeysvalueof{/tikz/shorten left}}
\pgfmathsetmacro{\pgf@shorten@right}{\pgfkeysvalueof{/tikz/shorten right}}
\pgfmathsetmacro{\pgf@shorten@left}{\pgfkeysvalueof{/tikz/shorten left}}
\pgfmathsetmacro{\pgf@shorten@right}{\pgfkeysvalueof{/tikz/shorten right}}
\pgfmathsetmacro{\pgf@shorten@left}{\pgfkeysvalueof{/tikz/shorten left}}
\pgfmathsetmacro{\pgf@shorten@right}{\pgfkeysvalueof{/tikz/shorten right}}
\pgfmathsetmacro{\pgf@shorten@left}{\pgfkeysvalueof{/tikz/shorten left}}
\pgfmathsetmacro{\pgf@shorten@right}{\pgfkeysvalueof{/tikz/shorten right}}
\pgfmathsetmacro{\pgf@shorten@left}{\pgfkeysvalueof{/tikz/shorten left}}
\pgfmathsetmacro{\pgf@shorten@right}{\pgfkeysvalueof{/tikz/shorten right}}
\tikzstyle{kpoint common}=[draw,fill=white,inner sep=1pt,minimum height=4mm]
\tikzstyle{langstate}=[shape=langcopoint,shorten left=5pt,kpoint common,font=\footnotesize]
\tikzstyle{langeffect}=[shape=langpoint,shorten left=5pt,kpoint common,font=\footnotesize]
\tikzstyle{langstatedash}=[shape=langcopoint,dashed, shorten left=5pt,kpoint common,font=\footnotesize]
\tikzstyle{langeffectdash}=[shape=langpoint,dashed, shorten left=5pt,kpoint common,font=\footnotesize]
\tikzstyle{langbox}=[shape=langrect,shorten left=5pt,kpoint common,font=\footnotesize] 
\tikzstyle{kpoint}=[shape=cornerpoint,shorten left=5pt,kpoint common]
\tikzstyle{kpoint adjoint}=[shape=cornercopoint,shorten left=5pt,kpoint common]
\tikzstyle{kpoint conjugate}=[shape=cornerpoint,shorten right=5pt,kpoint common]
\tikzstyle{kpoint transpose}=[shape=cornercopoint,shorten right=5pt,kpoint common]
\tikzstyle{kpoint symm}=[shape=cornerpoint,shorten left=5pt,shorten right=5pt,kpoint common]
\tikzstyle{black kpoint}=[shape=cornerpoint,shorten left=5pt,kpoint common,fill=black,font=\color{white}]
\tikzstyle{black kpoint adjoint}=[shape=cornercopoint,shorten left=5pt,kpoint common,fill=black,font=\color{white}]
\tikzstyle{black kpointadj}=[shape=cornercopoint,shorten left=5pt,kpoint common,fill=black,font=\color{white}]
\tikzstyle{black dkpoint}=[shape=cornerpoint,shorten left=5pt,kpoint common,fill=black, doubled,font=\color{white}]
\tikzstyle{black dkpoint adjoint}=[shape=cornercopoint,shorten left=5pt,kpoint common,fill=black, doubled,font=\color{white}]
\tikzstyle{black dkpointadj}=[shape=cornercopoint,shorten left=5pt,kpoint common,fill=black, doubled,font=\color{white}]
\tikzstyle{kpointdag}=[kpoint adjoint]
\tikzstyle{kpointadj}=[kpoint adjoint]
\tikzstyle{kpointconj}=[kpoint conjugate]
\tikzstyle{kpointtrans}=[kpoint transpose]
\tikzstyle{big kpoint}=[kpoint, minimum width=1.2 cm, minimum height=8mm, inner sep=4pt, text depth=3mm]
\tikzstyle{wide kpoint}=[kpoint, minimum width=1 cm, inner sep=2pt]
\tikzstyle{wide kpointdag}=[kpointdag, minimum width=1 cm, inner sep=2pt]
\tikzstyle{wide kpointconj}=[kpointconj, minimum width=1 cm, inner sep=2pt]
\tikzstyle{wide kpointtrans}=[kpointtrans, minimum width=1 cm, inner sep=2pt]
\tikzstyle{gray kpoint}=[kpoint,fill=gray!50!white]
\tikzstyle{gray kpointdag}=[kpointdag,fill=gray!50!white]
\tikzstyle{gray kpointadj}=[kpointadj,fill=gray!50!white]
\tikzstyle{gray kpointconj}=[kpointconj,fill=gray!50!white]
\tikzstyle{gray kpointtrans}=[kpointtrans,fill=gray!50!white]
\tikzstyle{gray dkpoint}=[kpoint,fill=gray!50!white,doubled]
\tikzstyle{gray dkpointdag}=[kpointdag,fill=gray!50!white,doubled]
\tikzstyle{gray dkpointadj}=[kpointadj,fill=gray!50!white,doubled]
\tikzstyle{gray dkpointconj}=[kpointconj,fill=gray!50!white,doubled]
\tikzstyle{gray dkpointtrans}=[kpointtrans,fill=gray!50!white,doubled]
\tikzstyle{white label}=[draw,fill=white,rectangle,inner sep=0.7 mm]
\tikzstyle{gray label}=[draw,fill=gray!50!white,rectangle,inner sep=0.7 mm]
\tikzstyle{black label}=[draw,fill=black,rectangle,inner sep=0.7 mm]
\tikzstyle{dkpoint}=[kpoint,doubled]
\tikzstyle{wide dkpoint}=[wide kpoint,doubled]
\tikzstyle{dkpointdag}=[kpoint adjoint,doubled]
\tikzstyle{wide dkpointdag}=[wide kpointdag,doubled]
\tikzstyle{dkcopoint}=[kpoint adjoint,doubled]
\tikzstyle{dkpointadj}=[kpoint adjoint,doubled]
\tikzstyle{dkpointconj}=[kpoint conjugate,doubled]
\tikzstyle{dkpointtrans}=[kpoint transpose,doubled]
\tikzstyle{kscalar}=[kpoint common, shape=EBox, inner xsep=-1pt, inner ysep=3pt,font=\small]
\tikzstyle{kscalarconj}=[kpoint common, shape=WBox, inner xsep=-1pt, inner ysep=3pt,font=\small]
 \tikzstyle{upground}=[circuit ee IEC,ground,rotate=90,scale=2.5]
 \tikzstyle{downground}=[circuit ee IEC,ground,rotate=-90,scale=2.5]
 \tikzstyle{bigground}=[regular polygon,regular polygon sides=3,draw=gray,scale=0.50,inner sep=-0.5pt,minimum width=10mm,fill=gray]
\tikzstyle{arrs}=[-latex,font=\small,auto]
\tikzstyle{arrow plain}=[arrs]
\tikzstyle{arrow dashed}=[dashed,arrs]
\tikzstyle{arrow bold}=[very thick,arrs]
\tikzstyle{arrow hide}=[draw=white!0,-]
\tikzstyle{arrow reverse}=[latex-]
\tikzstyle{cdnode}=[]
\newcommand{\smalldotonly}[1]{%
\,\begin{tikzpicture}[dotpic,yshift=-0.15mm]
\node [#1] (a) at (0,0) {};
\end{tikzpicture}\,}
\newcommand{\smallblackdot}{\smalldotonly{smalldot}\xspace}
\newcommand{\smallwhitedot}{\smalldotonly{small white dot}\xspace}
\newcommand{\smallgraydot}{\smalldotonly{small gray dot}\xspace}
\definecolor{hexcolor0xa9a9a9}{rgb}{0.663,0.663,0.663} 
\tikzstyle{GrayLine}=[dashed,draw=hexcolor0xa9a9a9] 
\tikzstyle{gray}=[dashed,draw=hexcolor0xa9a9a9]
\theoremstyle{definition}
\newtheorem{theorem}{Theorem}[section]
\newtheorem*{theorem*}{Theorem}
\newtheorem{defn}[theorem]{Definition}
\newtheorem{example*}[theorem]{Example*}
\newtheorem{examples*}[theorem]{Examples*}
\newtheorem{remark*}[theorem]{Remark*}
\def\bR{\begin{color}{red}}  
\def\bB{\begin{color}{blue}}
\def\bM{\begin{color}{magenta}}  
\def\bC{\begin{color}{cyan}}
\def\bW{\begin{color}{white}}
\def\bBl{\begin{color}{black}}
\def\bG{\begin{color}{green}}
\def\bY{\begin{color}{yellow}}
\def\e{\end{color}\xspace}
\newcommand{\bit}{\begin{itemize}}
\newcommand{\eit}{\end{itemize}\par\noindent}
\newcommand{\ben}{\begin{enumerate}}
\newcommand{\een}{\end{enumerate}\par\noindent}
\newcommand{\beq}{\begin{equation}}
\newcommand{\eeq}{\end{equation}\par\noindent}
\newcommand{\beqa}{\begin{eqnarray*}}
\newcommand{\eeqa}{\end{eqnarray*}\par\noindent}
\newcommand{\beqn}{\begin{eqnarray}}
\newcommand{\eeqn}{\end{eqnarray}\par\noindent}
\title{Grammar Equations}
\author{Bob Coecke\\
Oxford-based QNLP-team\\
Cambridge Quantum Computing Ltd.\\
  \texttt{\footnotesize bob.coecke@cambridgequantum.com} \\\And
Vincent Wang\\ 
Department of CS\\
University of Oxford\\ 
  \texttt{\footnotesize vincent.wang@stcatz.ox.ac.uk} \\} 
\date{}
\begin{document}    
\maketitle 

\begin{abstract}   
Diagrammatically speaking, grammatical calculi such as pregroups provide wires between words in order to elucidate their interactions, and this enables one to verify grammatical correctness of phrases and sentences.  In this paper we also provide wirings within words.  This will enable us to identify grammatical constructs that we expect to be either equal or closely related.  Hence, our work paves the way for a new theory of grammar, that provides novel `grammatical truths'.  We give a nogo-theorem for the fact that our wirings for words make no sense for preordered monoids, the form which grammatical calculi usually take.  Instead, they require diagrams -- or equivalently, (free) monoidal categories. 
 \end{abstract}
 
\section{Introduction}

Grammatical calculi \cite{Lambek0, Grishin, Lambek1} enable one to verify grammatical correctness of sentences. However, there are certain grammatical constructs that we expect to be closely related, if not the same, but which grammatical calculi fail to identify. We will focus on pregroups \cite{LambekBook}, but the core ideas of this paper extend well beyond pregroup grammars, including CCGs \cite{steedman1987combinatory}, drawing on the recent work in \cite{DRichie} that casts CCGs as augmented pregroups.
 
In this paper we both modify and extend grammatical calculi, by providing so-called `internal wirings' for a substantial portion of English. Diagrammatically speaking, while grammatical calculi provide wires between words in order to elucidate their interactions, we also provide wirings within words.   
For example, a \em pregroup diagram \em for the phrase:
\[
\tikzfig{vder-dancingmanIEQ1}
\]
will become:
\[
\tikzfig{vder-dancingmanIEQ2}
\]
We show how these additional wirings enable one to identify grammatical constructs that we expect to be closely related. Providing these internal wirings in particular involves decomposing basic types like sentence-types over noun-types, and this decomposition may vary from sentence to sentence. Hence our refinement of grammar-theory also constitutes a departure from some of the practices of traditional grammatical calculi. 

Additional structure for grammatical calculi was previously introduced by providing semantics to certain words, for example, quantifiers within Montague semantics \cite{montague1973proper}. This is not what we do. We strictly stay within the realm of grammar, and grammar only.
Hence, our work paves the way for a new theory of grammar, that provides novel `grammatical truths'.

Usually grammatical calculi take the form of preordered monoids \cite{Gospel}.  However, the internal wirings cannot be defined at the poset level, for which we provide a nogo-theorem. Hence passing to the realm of diagrammatic representations -- which correspond to proper free monoidal categories -- is not just a convenience, but a necessity for this work. They moreover provide a clear insight in the flow of meanings.

Internal wirings were  proposed within the DisCoCat framework \cite{CSC}, for relative pronouns and verbs \cite{FrobMeanI, FrobMeanII, GrefSadr, KartsaklisSadrzadeh2014, CLM, CoeckeText, CoeckeMeich}.  They are made up of `spiders' (a.k.a.~certain Frobenius algebras) \cite{CPV, CKbook}.  We point out a shortcoming of those earlier proposed internal wirings, and fix them by introducing a `wrapping gadget', that forces certain wires to stay together. This  re-introduces composite types such as sentence types.

What we present here is only part of the full story. For the latter we refer to a forthcoming much longer paper \cite{GramCircs}, which besides providing many more internal wirings than given here, also uses them to provide bureaucracy-free grammar as circuits, the equivalence classes for the equations introduced here. These circuits also have direct practical applications within natural language processing -- see e.g.~\cite{QNLP-foundations}.


\section{Statement of the problem} 

For our purposes, a pregroup has a set of `basic types' $n, s, ...$ each of which admit left and right inverses ${}^{-1}n$ and $n^{-1}$. Each grammatical type is assigned a string of these, e.g.~a transitive verb in English gets:
$tv = {}^{-1}n \cdot s \cdot n^{-1}$. The inverses `cancel out' from one direction:
\beq\label{pregroupcanc}
 n \, \cdot  \, {}^{-1}n \ \rightarrow \ 1\qquad\qquad n^{-1} \cdot \, n \ \rightarrow \ 1 
\eeq
A sentence is grammatical if when taking the string of all of its grammatical types, the inverses cancel to leave a special, `final', basic type $s$ (for sentence), like here for $n \cdot tv \cdot n$: 
\beqa
&&n  \, \cdot  \, \left({}^{-1}n \cdot s \cdot n^{-1}\right) \, \cdot\, n \\
&\stackrel{(assoc.)}{\rightarrow} & 
\left(n  \, \cdot   {}^{-1}n\right) \cdot s \cdot \left(n^{-1}  \cdot\, n\right)\\
&\stackrel{\ (\ref{pregroupcanc})\ }{\rightarrow}&
 1 \cdot s\cdot 1 \\
&\stackrel{(unit)}{\rightarrow} &  
s
\eeqa
This calculation can be represented diagrammatically: 
\beq\label{eq:pregroup}
\tikzfig{pregroup} 
\eeq

Now consider the following examples:
\begin{center}
{\footnotesize\tt Alice likes the flowers that Bob gives Claire}\\
{\footnotesize\tt Bob gives Claire the flowers that Alice likes}   
\end{center}
The pregroup diagrams now look as in Figure (\ref{flowers}). 
\begin{figure*}
\centering
\tikzfig{flowers1}
\tikzfig{flowers2}
\caption{\label{flowers}} 
\end{figure*}
Without any further context the factual data conveyed by these two sentences is the same.\footnote{Additional context could indicate a causal connection between the two parts of the sentence, which could result in the two sentences having different meanings -- see \cite{GramCircs} for more details.} How can we formally establish this connection between the two sentences?

\section{Rewriting pregroup diagrams via internal wirings}\label{sec:internal wiring} 

What is needed are `internal wirings' of certain  words, that is, not treating these words as `black boxes', but specifying what is inside, at least to some extent. Equationally speaking, they provide a congruence for pregroup diagrams, and we can establish equality by means of topological deformation.

For constructing these internal wirings we make use of `spiders' \cite{CPV, CKbook} (a.k.a.~Frobenius algebras \cite{CarboniWalters, CatsII}). One can think of these spiders as a generalisation of wires to multi-wires, as rather than having two ends, they can have multiple ends.  Still, all they do, like wires, is connect stuff, and when you connect connected stuff to other connected stuff (a.k.a.~`spider-fusion'):
\[
\tikzfig{spider}\ \ =\ \ \tikzfig{spidercomp} 
\]
\[
\tikzfig{plaindot}\ \ =\ \ \tikzfig{plain}
\qquad\quad
\tikzfig{cupdot} \ \ =\ \ \tikzfig{cup}
\]

We presented internal wiring in terms of pregroup diagrams.  This is because they do not make sense in terms of symbolic  pregroups presented as preordered monoids: 

\begin{theorem}  
A pregroup with spiders is trivial. Concretely, given a  preordered monoid $(X, \leq, \otimes)$ with unit $1$, if for $x \in X$ there are spiders with $x$ as its legs,  then $x \simeq 1$. 
\end{theorem}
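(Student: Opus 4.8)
The plan is to exploit the standard dictionary that a preordered monoid $(X,\leq,\otimes)$ with unit $1$ is precisely a \emph{thin} strict monoidal category: the objects are the elements of $X$, there is a (necessarily unique) morphism $a\to b$ exactly when $a\leq b$, composition is transitivity of $\leq$, the monoidal product is $\otimes$ (monotone in each argument), and every coherence equation holds automatically because between any two objects there is at most one morphism. Under this dictionary, "$x$ carries spiders" means in particular that $x$ is equipped with a family of morphisms $x^{\otimes m}\to x^{\otimes n}$ for all $m,n\geq 0$ satisfying spider fusion; all I shall actually use is the bare existence of the two degenerate members of this family, namely the unit $\eta\colon 1\to x$ (the $0$-to-$1$ spider, i.e.\ a state on $x$) and the counit $\epsilon\colon x\to 1$ (the $1$-to-$0$ spider, i.e.\ an effect on $x$).

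First I would pin down the definition of "spider with $x$ as its legs" so that the nullary-output and nullary-input spiders $\eta$ and $\epsilon$ are indeed part of the data. Translating back through the dictionary, existence of $\eta\colon 1\to x$ says $1\leq x$, and existence of $\epsilon\colon x\to 1$ says $x\leq 1$. Recalling that $a\simeq b$ means $a\leq b$ and $b\leq a$ (we cannot assume $\leq$ antisymmetric, which is exactly why the conclusion is phrased with $\simeq$ rather than literal equality), these two inequalities immediately give $x\simeq 1$. For the headline claim that "a pregroup with spiders is trivial", I would then note that if \emph{every} basic type carries spiders, then every basic type is $\simeq 1$, hence so is every string of basic types, so the type monoid collapses modulo $\simeq$ to the one-element monoid and the pregroup cancellation calculus becomes vacuous.

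The step I expect to be the genuine — and essentially the only — obstacle is not any computation, since thinness annihilates all the equational axioms of a Frobenius algebra (associativity, the Frobenius law, specialness, commutativity) so that nothing has to be verified there; rather it is the \emph{conceptual} one of fixing the right notion of "spider" so that the collapse actually follows. One must be careful that the structure includes the $0$-legged spiders, equivalently a state $1\to x$ and an effect $x\to 1$: it is precisely these, and not the binary multiplication $x\otimes x\to x$ and comultiplication $x\to x\otimes x$, that force triviality, for with the $0$-ary legs removed one would only obtain $x\otimes x\simeq x$. I would close by remarking that the argument is very robust — \emph{any} gadget on $x$ providing both a morphism $1\to x$ and a morphism $x\to 1$ trivialises a preordered monoid — which is exactly the moral the paper draws: nontrivial internal wirings cannot live at the poset level and genuinely require proper (free) monoidal categories.
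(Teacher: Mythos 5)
Your proposal is correct and follows essentially the same route as the paper: the paper's proof likewise observes that spiders on $x$ give morphisms $\bigotimes^j x \leq \bigotimes^k x$ for all $j,k$, and then specialises to the $1$-to-$0$ and $0$-to-$1$ cases to obtain $x \leq 1$ and $1 \leq x$, hence $x \simeq 1$. Your additional remarks --- that thinness trivialises the Frobenius equations, and that the nullary spiders are the essential ingredient --- are accurate glosses on the same argument.
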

\begin{proof}
Having spiders on $x$ means that for all $j,k \in \mathbb{N}$  there exists: 
\[
\tikzfig{vprf-thinness0copy} 
\]
that is, we have $\bigotimes^j x \leq \bigotimes^k x$. So in particular, $x \leq 1$ and $1\leq x$, so $x \simeq 1$.
\end{proof}

Hence this paper requires diagrams in a fundamental manner.\footnote{One SEMSPACE referee requested a category-theoretic generalisations of the above stated nogo-theorem.  Such a generalisation has been provided on Twitter following our request \cite{AmarTwitter}.  Our result should also not be confused with the (almost contradictory sounding) following one, which states that pregroups \underline{are}  spiders in the category of preordered relations \cite{DuskoPregroup}.}  

\subsection{Internal wiring for relative pronouns}    
 
For relative pronouns we start with the internal wirings  that were introduced in \cite{FrobMeanI, FrobMeanII}:
\beq\label{eq:relpron}  
\tikzfig{relpron}  
\eeq
Substituting this internal wiring in the pregroup diagrams we saw above:      
and permuting the boxes a bit, more specifically, swapping {\tt Bob gives Claire} and {\tt Alice likes} in the 2nd diagram, 
the two diagrams start to look a lot more like each other, as can be seen in figure \ref{flowersBIS}. Their only difference is a twist which vanishes if we take spiders to be commutative,\footnote{Non-commutativity can be seen as a witness for the fact that within a broader context the two sentences may defer in meaning due to a potential causal connection between its two parts -- see \cite{GramCircs} for more details.} and either a loose sentence-type wire coming out of the verb {\tt likes} in the first diagram, versus coming out the verb {\tt give} in the second diagram, the other verb having its sentence type deleted.

\begin{figure*}
\centering
\tikzfig{flowers1rpBIScopy}
\tikzfig{flowers2rpBISnoncopy}
\caption{\label{flowersBIS}}
\end{figure*}

\subsection{Internal wiring for verbs}\label{sec:intWverb}    

The deleting of sentence-types of verbs:
\beq\label{eq:deleted}
\tikzfig{form1}  
\eeq
by the internal wiring of relative pronouns seems to prevent us from bringing the diagrams of Figure (\ref{flowersBIS}) any closer to each other. However, this irreversibility does not happen for a particular kind of internal wiring for the verb \cite{GrefSadr, KartsaklisSadrzadeh2014, CoeckeText, CoeckeMeich}, here generalised to the non-commutative case as demonstrated by the transitive verb in Figure (\ref{eq:verb}). For transitive verbs in spider-form, if the sentence type gets deleted we can bring back the original form by copying the remaining wires:   
\beqa
&&\scalebox{0.8}{\tikzfig{verb2copy}}\\
&&\stackrel{\mbox{\tiny introduce spider}}{\mapsto}\ \ \scalebox{0.8}{\tikzfig{verb3copy}}
\eeqa
So nothing was ever lost. To conclude, for the internal wiring of verbs proposed above, the copying and deleting spiders now guarantee that in (\ref{eq:deleted}) nothing gets lost.

\subsection{Rewriting pregroup diagrams into each other} 

Introducing the internal wiring (\ref{eq:verb}) and deleting all outputs, our example sentences now appear as in the first two diagrams of Figure (\ref{flowersQUAD}). Except for the twist the two pregroup diagrams have become the same.  As we have no outputs anymore, let's just stick in a copy-spider for all nouns, and then  after fusing all deletes away, our sentence is transformed into the third diagram of figure \ref{flowersQUAD}.

\begin{figure*}
\centering
\tikzfig{flowers1rpQUADcopy}
\tikzfig{flowers2rpQUADcopy}
\tikzfig{nodels}
\caption{\label{flowersQUAD}}  
\end{figure*}

The recipe we followed here is an instance of a general result that allows us to relate sentences for a substantial portion of English, by providing internal wirings for that fragment. In Section \ref{sec:catalog} we will provide internal wirings some grammatical word classes -- in \cite{GramCircs} we provide a much larger catalog -- that will generate correspondences between grammatical constructs, just like the one established above. In Section \ref{Proof-of-concept} we provide some further examples of this. In \cite{GramCircs} we also provide a normal form induced by grammar equations.

\section{The wrapping gadget}

Above in (\ref{eq:verb}) we saw that sentence wires were decomposed into noun wires. However, for pregroup proofs it is important to know that those wires do belong together, so we need to introduce a tool that enables us to express that they belong together.

\begin{defn} 
\label{def:wrapgadget}
The \underline{wrapping gadget}  forces a number of  wires to be treated as one, i.e.~it \underline{wraps} them, and is denoted as follows:
\[
\tikzfig{v-wrappinggadgetcopyBIS}     
\]
By \underline{unfolding} we mean dropping the restrictions imposed by the wrapping gadget.  Cups and spiders carry over to wrapped wires in the expected way, following the conventions of \cite{CKbook}. 
\end{defn}  

In fact, in the case of relative pronouns simply wrapping the noun-wires making up the sentence type isn't enough, as the counterexample in Figure (\ref{ctrex}) shows.

\begin{figure*}
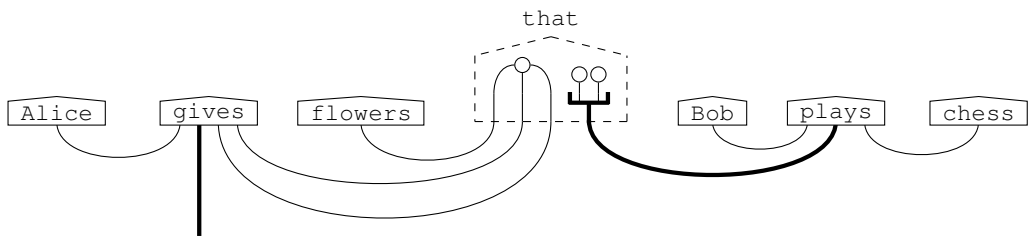

\centering
\tikzfig{ChessDog2}
\caption{\label{ctrex}The deleting of the sentence type of {\tt plays} belongs together with the noun-wire now connecting the relative pronoun with {\tt gives}, like in Figure (\ref{flowers}). This is enforced by the internal wiring of the object relative pronoun in Figure (\ref{eq:relpron})}
\end{figure*}

\section{Some more internal wirings}
\label{sec:catalog}

We now provide internal wirings for some grammatical word classes that feature in the examples of the next section. We distinguish between `content words', like the verbs in (\ref{eq:verb}), and `functional words', like the relative pronouns in (\ref{eq:relpron}).

\subsection{Content words}

We provide internal wirings for intransitive and transitive verbs in Figure (\ref{eq:verb}), and predicative and attributive adverbs for transitive verbs in Figure (\ref{eq:adverb}).

\begin{figure*}
\centering
\tikzfig{vtype-grameq-verbscopy}
\caption{\label{eq:verb}}
\end{figure*}

\begin{figure*}
\centering
\tikzfig{vtype-adverb-inspired-2}
\caption{\label{eq:adverb}}
\end{figure*}

\subsection{Functional words}

We provide internal wirings for subject and object relative pronouns for intransitive verbs, and a passive-voice construction `word' for transitive verbs in Figure (\ref{eq:relpron}).

\begin{figure*}
\centering
\tikzfig{vtype-edTVcopy}
\caption{\label{eq:relpron}}
\end{figure*}

\section{Proof-of-concept}\label{Proof-of-concept}


We provide a number of examples of how the internal wirings proposed above enable us to relate different grammatical constructs just as in the case of what the relative pronoun and verb internal wirings did for the sentences in Figure (\ref{flowers}). We omit the pregroup typings, instead depicting the pregroup diagrams directly. Wrapping gadgets correspond to bracketing pregroup types together.

\bibliography{mainNOW}

\begin{thebibliography}{24}
\expandafter\ifx\csname natexlab\endcsname\relax\def\natexlab#1{#1}\fi

\bibitem[{Carboni and Walters(1987)}]{CarboniWalters}
A.~Carboni and R.~F.~C. Walters. 1987.
\newblock Cartesian bicategories {I}.
\newblock \emph{Journal of Pure and Applied Algebra}, 49:11--32.

\bibitem[{Coecke(2013)}]{Gospel}
B.~Coecke. 2013.
\newblock An alternative {G}ospel of structure: order, composition, processes.
\newblock In C.~Heunen, M.~Sadrzadeh, and E.~Grefenstette, editors,
  \emph{Quantum Physics and Linguistics.}, pages 1 -- 22. Oxford University
  Press.
\newblock ArXiv:1307.4038.

\bibitem[{Coecke(2019)}]{CoeckeText}
B.~Coecke. 2019.
\newblock The mathematics of text structure.
\newblock ArXiv:1904.03478.

\bibitem[{Coecke et~al.(2020)Coecke, de~Felice, Meichanetzidis, and
  Toumi}]{QNLP-foundations}
B.~Coecke, G.~de~Felice, K.~Meichanetzidis, and A.~Toumi. 2020.
\newblock Foundations for near-term quantum natural language processing.

\bibitem[{Coecke and Kissinger(2017)}]{CKbook}
B.~Coecke and A.~Kissinger. 2017.
\newblock \emph{Picturing Quantum Processes. A First Course in Quantum Theory
  and Diagrammatic Reasoning}.
\newblock Cambridge University Press.

\bibitem[{Coecke et~al.(2018)Coecke, Lewis, and Marsden}]{CLM}
B.~Coecke, M.~Lewis, and D.~Marsden. 2018.
\newblock Internal wiring of cartesian verbs and prepositions.
\newblock In \emph{{\rm Procs.~of the 2018 Workshop on} Compositional
  Approaches in Physics, NLP, and Social Sciences}, volume 283 of
  \emph{Electronic Proceedings in Theoretical Computer Science}, pages 75--88.

\bibitem[{Coecke and Meichanetzidis(2020)}]{CoeckeMeich}
B.~Coecke and K.~Meichanetzidis. 2020.
\newblock Meaning updating of density matrices.
\newblock \emph{arXiv:2001.00862}.

\bibitem[{Coecke and Paquette(2011)}]{CatsII}
B.~Coecke and {\'E}.~O. Paquette. 2011.
\newblock Categories for the practicing physicist.
\newblock In B.~Coecke, editor, \emph{New Structures for Physics}, Lecture
  Notes in Physics, pages 167--271. Springer.
\newblock {a}rXiv:0905.3010.

\bibitem[{Coecke et~al.(2013)Coecke, Pavlovi{\'c}, and Vicary}]{CPV}
B.~Coecke, D.~Pavlovi{\'c}, and J.~Vicary. 2013.
\newblock A new description of orthogonal bases.
\newblock \emph{Mathematical Structures in Computer Science, to appear},
  23:555--567.
\newblock {a}rXiv:quant-ph/0810.1037.

\bibitem[{Coecke et~al.(2010)Coecke, Sadrzadeh, and Clark}]{CSC}
B.~Coecke, M.~Sadrzadeh, and S.~Clark. 2010.
\newblock Mathematical foundations for a compositional distributional model of
  meaning.
\newblock In J.~van Benthem~et al., editor, \emph{A Festschrift for Jim
  Lambek}, volume~36 of \emph{Linguistic Analysis}, pages 345--384.
\newblock Ar{x}iv:1003.4394.

\bibitem[{Coecke and Wang()}]{GramCircs}
B.~Coecke and V.~Wang.
\newblock Distilling grammar into circuits (or, de-humanising grammar for
  efficient machine use).

\bibitem[{Grefenstette and Sadrzadeh(2011)}]{GrefSadr}
E.~Grefenstette and M.~Sadrzadeh. 2011.
\newblock Experimental support for a categorical compositional distributional
  model of meaning.
\newblock In \emph{The 2014 Conference on Empirical Methods on Natural Language
  Processing.}, pages 1394--1404.
\newblock Ar{X}iv:1106.4058.

\bibitem[{Grishin(1983)}]{Grishin}
V.N. Grishin. 1983.
\newblock On a generalization of the {A}jdukiewicz-{L}ambek system.
\newblock In \emph{Studies in nonclassical logics and formal systems}, pages
  315--334. Nauka, Moscow.

\bibitem[{Hadzihasanovic(2021)}]{AmarTwitter}
A.~Hadzihasanovic. 2021.
\newblock \href {https://twitter.com/amar_hh/status/1387812367417987077}
  {{T}witter reply}.

\bibitem[{Kartsaklis and Sadrzadeh(2014)}]{KartsaklisSadrzadeh2014}
D.~Kartsaklis and M.~Sadrzadeh. 2014.
\newblock A study of entanglement in a categorical framework of natural
  language.
\newblock In \emph{Proceedings of the 11th Workshop on Quantum Physics and
  Logic (QPL)}. Kyoto Japan.

\bibitem[{Lambek(1958)}]{Lambek0}
J.~Lambek. 1958.
\newblock The mathematics of sentence structure.
\newblock \emph{American Mathematics Monthly}, 65.

\bibitem[{Lambek(1999)}]{Lambek1}
J.~Lambek. 1999.
\newblock Type grammar revisited.
\newblock \emph{Logical Aspects of Computational Linguistics}, 1582.

\bibitem[{Lambek(2008)}]{LambekBook}
J.~Lambek. 2008.
\newblock From word to sentence.
\newblock \emph{Polimetrica, Milan}.

\bibitem[{Montague(1973)}]{montague1973proper}
R.~Montague. 1973.
\newblock The proper treatment of quantification in ordinary {E}nglish.
\newblock In \emph{Approaches to natural language}, pages 221--242. Springer.

\bibitem[{Pavlovic(2021)}]{DuskoPregroup}
D.~Pavlovic. 2021.
\newblock Lambek pregroups are frobenius spiders in preorders.
\newblock \emph{arXiv preprint arXiv:2105.03038}.

\bibitem[{Sadrzadeh et~al.(2013)Sadrzadeh, Clark, and Coecke}]{FrobMeanI}
M.~Sadrzadeh, S.~Clark, and B.~Coecke. 2013.
\newblock The {F}robenius anatomy of word meanings {I}: subject and object
  relative pronouns.
\newblock \emph{Journal of Logic and Computation}, 23:1293--1317.
\newblock Ar{X}iv:1404.5278.

\bibitem[{Sadrzadeh et~al.(2016)Sadrzadeh, Clark, and Coecke}]{FrobMeanII}
M.~Sadrzadeh, S.~Clark, and B.~Coecke. 2016.
\newblock The {F}robenius anatomy of word meanings {II}: possessive relative
  pronouns.
\newblock \emph{Journal of Logic and Computation}, 26:785--815.
\newblock ArXiv:1406.4690.

\bibitem[{Steedman(1987)}]{steedman1987combinatory}
M.~Steedman. 1987.
\newblock Combinatory grammars and parasitic gaps.
\newblock \emph{Natural Language \& Linguistic Theory}, 5:403--439.

\bibitem[{Yeung and Kartsaklis(2021)}]{DRichie}
R.~Yeung and D.~Kartsaklis. 2021.
\newblock A {CCG}-based version of the {D}is{C}o{C}at framework.
\newblock \emph{Accepted for SEMSPACE}.

\end{thebibliography}
\bibliographystyle{acl_natbib}

\begin{figure*}
\centering
\tikzfig{vex-boreCOMPI}
\tikzfig{vex-boreCOMPII}
\tikzfig{vex-boreCOMPIII}
\caption{
\textbf{We relate:} $\texttt{Alice } \underbrace{ \texttt{is bored by} }_{\text{passive voice}}  \texttt{ the class}$ {\bf to:} {\tt The class bores Alice}
}
\end{figure*}

\begin{figure*}
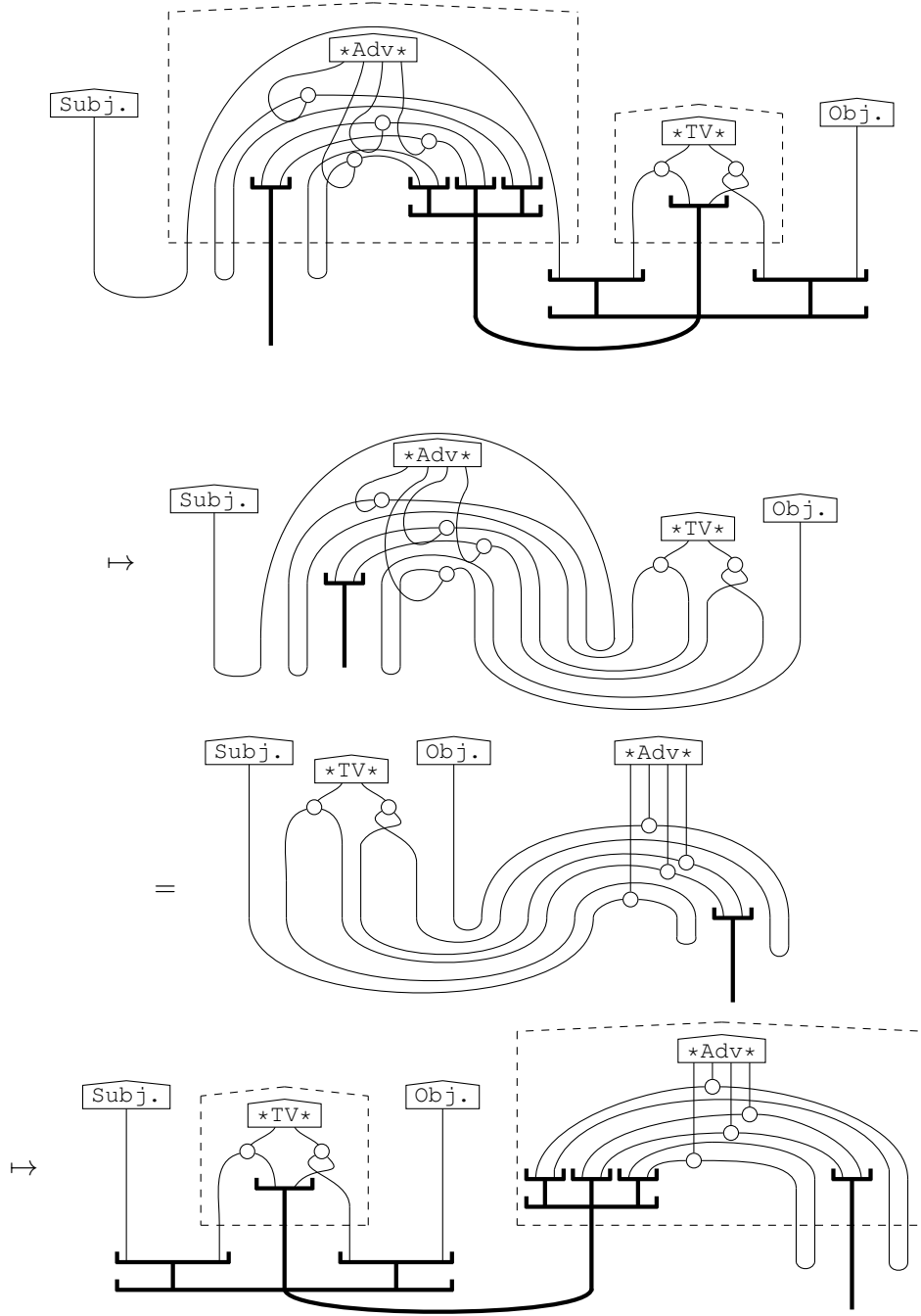

\centering
\tikzfig{vder-tvadvequivCOMPI}
\tikzfig{vder-tvadvequivCOMPII}  
\tikzfig{vder-tvadvequivCOMPIV}
\tikzfig{vder-tvadvequivCOMPV}
\caption{
\textbf{We relate:} {\tt Alice washes Fido gently} 
{\bf to:} {\tt Alice gently washes Fido}
}
\end{figure*}

\begin{figure*}
\centering
\tikzfig{vtype-posspronI}
\tikzfig{vtype-posspronII}
\tikzfig{vtype-posspronIIIb}
\tikzfig{vtype-posspronIV}

\caption{
\textbf{From:} {\tt author that owns book that John (was) entertain(s) -ed (by)} {\bf we derive a possessive relative pronoun:} {\tt author \underline{whose} book entertained John}
}
\end{figure*}

\begin{figure*}
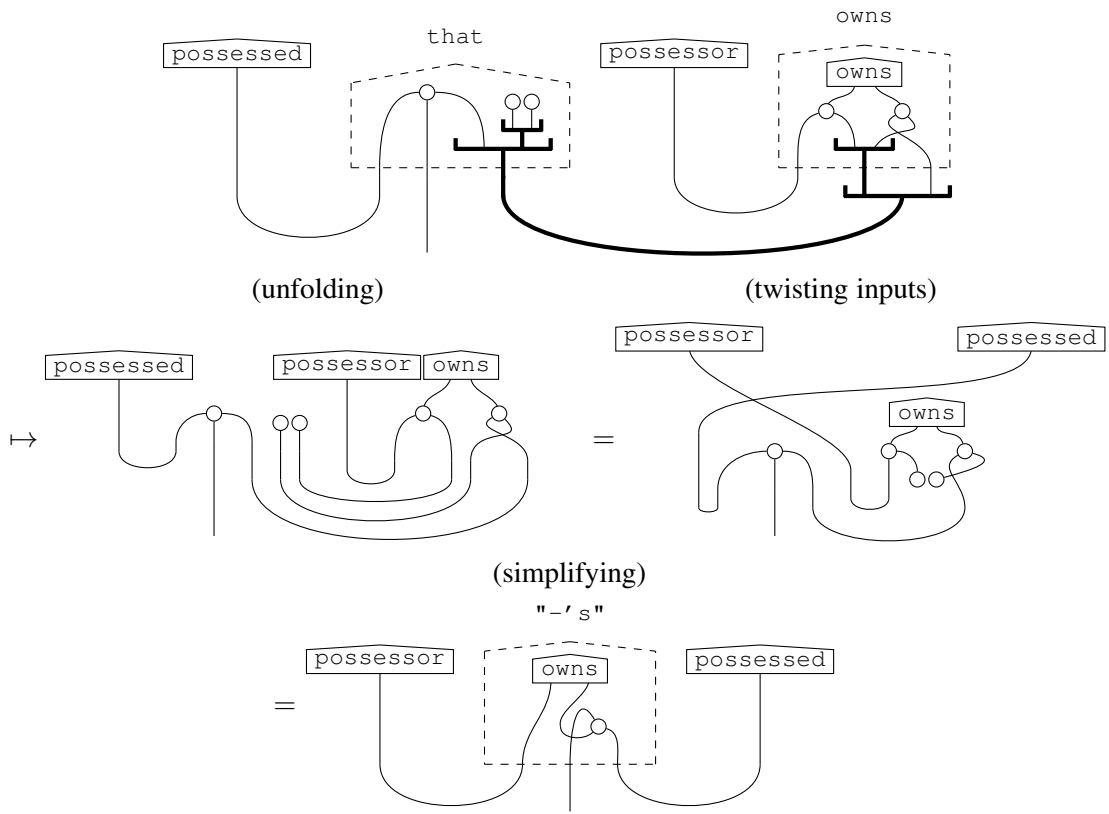

\centering
\tikzfig{vtype-possessiveI}
\tikzfig{vtype-possessiveII}
\tikzfig{vtype-possessiveIII}
\caption{
\textbf{From:} {\tt (possessed) that (possessor) owns} {\bf we derive the possessive modifier:} {\tt (possessor) \underline{'s} (possessed)} 
}
\end{figure*}

\end{document}